\def\formername{\textsc{TransNormer}\xspace}
\def\localattentionname{\textsc{DiagAttention}\xspace}
\def\attentionname{\textsc{NormAttention}\xspace}
\definecolor{dkgreen}{rgb}{0,0.6,0}
\definecolor{gray}{rgb}{0.5,0.5,0.5}
\definecolor{mauve}{rgb}{0.58,0,0.82}
\tiny\color{gray},
\def\etc{\emph{etc.}}
\def\ie{\emph{i.e., }}
\newcommand*{\tran}{^{\mkern-1.5mu\mathsf{T}}}
\newcommand\blfootnote[1]{%
  \begingroup
  \renewcommand\thefootnote{}\footnote{#1}%
  \addtocounter{footnote}{-1}%
  \endgroup
}
\title{The Devil in Linear Transformer}
\author{
{\normalsize
$^{\star}$Zhen Qin$^{1}$,
$^{\star}$Xiaodong Han$^{1}$,
Weixuan Sun$^{2,3}$,
Dongxu Li$^{2}$,
}\\
{\normalsize
\textbf{Lingpeng Kong$^{4,5}$,
Nick Barnes$^{2}$,
$^\textrm{\Letter}$Yiran Zhong$^{4}$}
}\\
$^{1}$SenseTime Research, $^{2}$Australian National University,\\ $^{3}$OPPO Research Institute, $^{4}$Shanghai AI Laboratory, $^{5}$The University of Hong Kong\\
\ \ \texttt{https://github.com/OpenNLPLab/Transnormer} 
}
\begin{document}
\maketitle

\begin{abstract}
Linear transformers aim to reduce the quadratic space-time complexity of vanilla transformers. However, they usually suffer from degraded performances on various tasks and corpora.
In this paper, we examine existing kernel-based linear transformers and identify two key issues that lead to such performance gaps: 
1) \emph{unbounded gradients} in the attention computation adversely impact the convergence of linear transformer models;
2) \emph{attention dilution} which trivially distributes attention scores over long sequences while neglecting neighbouring structures.
To address these issues, we first identify that the \emph{scaling} of attention matrices is the devil in unbounded gradients, which turns out unnecessary in linear attention as we show theoretically and empirically.
To this end, we propose a new linear attention that replaces the scaling operation with a normalization to stabilize gradients.
For the issue of attention dilution, we leverage a \emph{diagonal attention} to confine attention to only neighbouring tokens in early layers.
%
%
Benefiting from the stable gradients and improved attention, our new linear transformer model, \formername, demonstrates superior performance on text classification and language modeling tasks, as well as on the challenging Long-Range Arena benchmark, surpassing vanilla transformer and existing linear variants by a clear margin while being significantly more space-time efficient. The code is available at \href{https://github.com/OpenNLPLab/Transnormer}{\formername}.
\blfootnote{\noindent $^{\star}$Equal contribution. $^\textrm{\Letter}$ The corresponding author (Email: \textit{zhongyiran@gmail.com}). This work was done when Weixuan Sun and Yiran Zhong were in the SenseTime Research.}
\end{abstract}

\section{Introduction}
Transformer models show great performance on a wide range of natural language processing and computer vision tasks~\cite{zhen2022cosformer,sun2022vicinity,cheng2022implicit,cheng2022deep,zhou2022avs}.
One issue of the vanilla transformer model lies in its quadratic space-time complexity with respect to the input length.
Various prior works attempt to alleviate this inefficiency~\cite{zaheer2020big,beltagy2020longformer,tay2020sparse,kitaev2020reformer,child2019generating,liu2022neural,sun2022vicinity}.
In this work, we focus on a particular subset of these methods, known as \emph{kernel-based linear transformers}~\cite{choromanski2020rethinking,wang2020linformer,katharopoulos2020transformers,peng2021random,zhen2022cosformer} considering their desirable linear space-time complexity.
%


Despite their space-time efficiency, linear transformers are not always in favor for practical adoption, largely due to the degraded performance than the vanilla model.
To address this issue, we take a close look at existing kernel-based linear transformers and identify \textbf{\emph{two}} deficiencies that lead to such a performance gap.
\begin{figure}[t]    
    \begin{center}    
    {\includegraphics[width=1.0\linewidth]{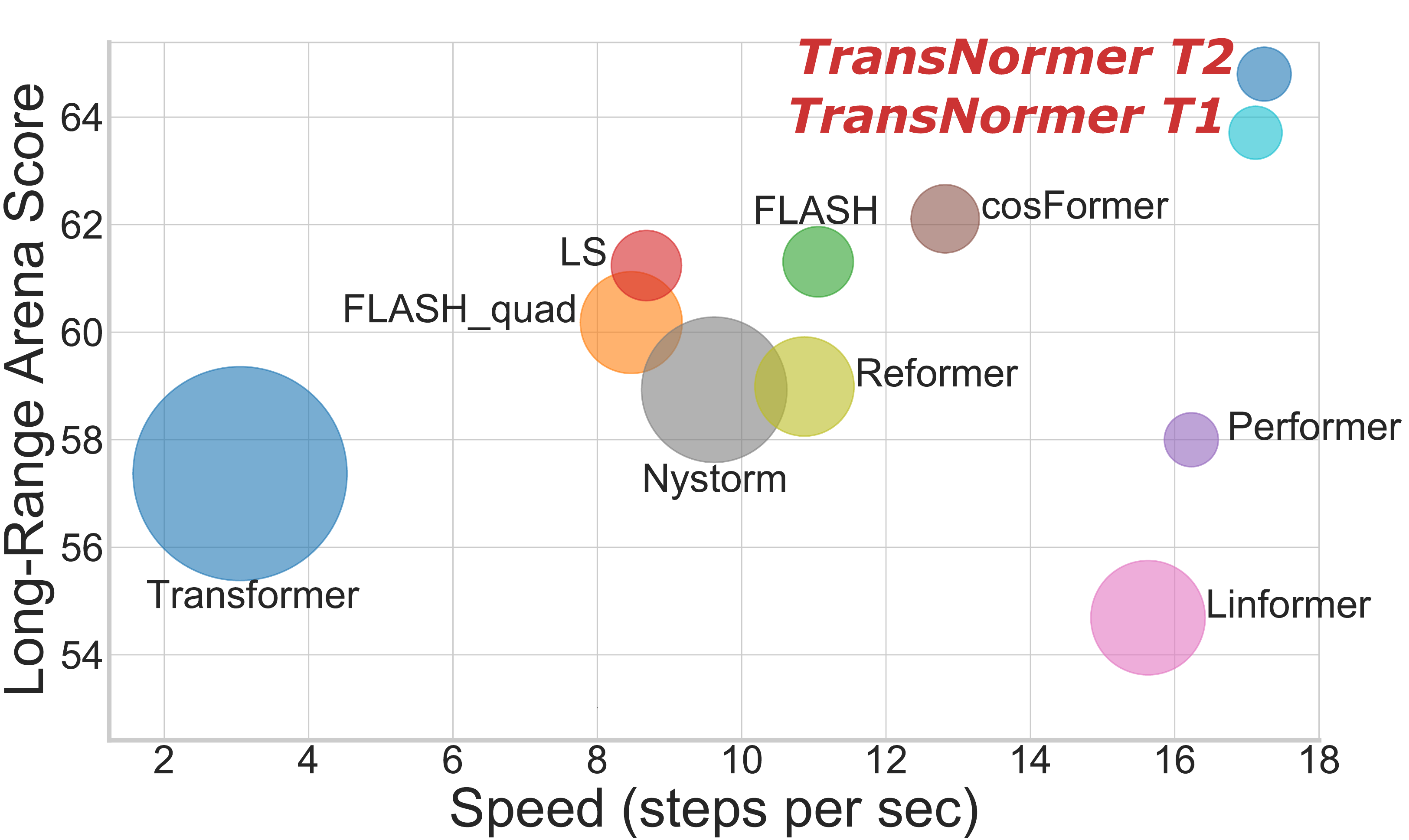}}     
    \end{center} 
    \caption{\formername has smaller memory footprints (circle sizes) and produces clearly favorable speed (x-axis) and overall scores (y-axis), when evaluated on the challenging Long-Range Arena benchmark than the vanilla transformer and other competing methods.
    }   
    \label{fig:score speed mem} 
\end{figure}

\noindent
\textbf{{Unbounded gradients.}} Most existing linear transformers inherit attention formulation from the vanilla transformer, which scales attention scores to ensure they are bounded within $[0,1]$.
However, we theoretically show that such a scaling strategy renders unbounded gradients for linear transformer models.
As a result, the unbounded gradients empirically lead to unstable convergence as our preliminary experiments suggest.
%
\noindent \\
\textbf{{Attention dilution.}}
%
Previous works~\cite{titsias2016one,jang2016categorical,gao2017properties,zhen2022cosformer,sun2022vicinity,sun2022locality} suggest that in vanilla transformer, softmax attention maps tend to be local.
In contrast, as shown in Fig~\ref{fig:attention matrix}, we observe that linear transformers often trivially distribute attention scores over the entire sequence even in early layers.
Due to this issue, which we refer as \emph{attention dilution}, important local information is less well preserved in linear models, resulting in inferior performance. 
%
%
This negative impact of attention dilution is also evidenced by the performance drop in our controlled experiments if partly replacing vanilla attention in transformer layers with linear attention ones.

To mitigate these issues, we propose a linear transformer model, called \formername, which shows better performance than vanilla transformer on a wide range of task while being significantly faster during runtime, as shown in Fig.~\ref{fig:score speed mem}.

%
%

%
%
%

To avoid the unbounded gradients, we introduce \attentionname, which gets rid of scaling over attention matrices while appending an additional normalization only \emph{after} the attention layer.
The choice of the normalization operator is unrestricted, for example, LayerNorm~\cite{1607.06450} or RMSNorm~\cite{zhang-sennrich-neurips19} both serve the purpose.
We show empirical results demonstrating that with \attentionname, the gradients are more stable during training, which in turn leads to more consistent convergence.



%
%

To alleviate the attention dilution issue, we modify the vanilla attention and allow each token to only attend to its neighbouring tokens, resulting in a \emph{diagonal} attention.
To mimic the behaviors on local semantics of the vanilla transformer, we employ the diagonal attention on early layers while using \attentionname for later ones.
In this way, we encourage the model to capture both local and global language context.
Note that our diagonal attention can be efficiently computed such that the overall linear space-time complexity of \formername is preserved.


We perform extensive experiments on standard tasks, where \formername demonstrates \emph{lower language modeling perplexities} on WikiText-103 and overall \emph{higher text classification accuracy} on GLUE than vanilla model and other competing methods.
In addition, on the challenging Long-Range Arena benchmark, \formername also shows favorable results while being \emph{faster and more scalable} with longer inputs during both training and inference time.


%




\section{Background and related work}
We first briefly review vanilla transformer \cite{vaswani2017attention} and its efficient variants.
The key component of transformers is the self-attention, which operates on query $\mathbf{Q}$, key $\mathbf{K}$ and value $\mathbf{V}$ matrices; each of them is the image of a linear projection taking $\mathbf X\in \mathbb R^{n\times d}$ as input:
\begin{equation}
\textstyle
\small
\begin{gathered}
    \mathbf{Q} = \mathbf{X}\mathbf{W}_Q  ,
 \mathbf{K} =\mathbf{X}\mathbf{W}_K,
\mathbf{V} =\mathbf{X}\mathbf{W}_V \in \mathbb{R}^{n\times d},
\end{gathered}
\end{equation}
with $n$ the input length, $d$ the hidden dimension. 
The output $\mathbf{O}\in \mathbb R^{n\times d}$ is formulated as:
\begin{equation}
\textstyle
\small
\mathbf{O} = \mathrm{Softmax}(\mathbf{Q} \mathbf{K}\tran / \sqrt{d}) \mathbf{V},
\end{equation}
where the $\mathrm{Softmax}(\cdot)$ step renders quadratic space-time complexity with respect to the input length, making it prohibitive for vanilla transformer to scale to long input sequences.
%
To address this issue, numerous efficient transformers have been explored in the literature.
These methods can be generally categorized into two families, \ie \emph{pattern based methods} and \emph{kernel based methods}.

Pattern based methods~\cite{zaheer2020big,beltagy2020longformer,tay2020sparse,kitaev2020reformer,child2019generating} sparsify the attention calculation with handcrafted or learnable masking patterns.
Kernel-based methods adopt kernel functions to decompose softmax attention, which reduces the theoretical space-time complexity to linear. In this paper, we refer the kernel-based variants as linear transformers for simplicity.

In the kernel-based methods~\cite{choromanski2020rethinking,katharopoulos2020transformers,peng2021random,zhen2022cosformer,zheng2022linear,wang2020linformer}, a kernel function $\phi(\cdot)$ maps queries and keys to their hidden representations. Then the output of the linear attention can be rewritten as:
\begin{equation}
\small
\begin{aligned}
\mathbf{O}&=\mathbf \Delta^{-1} \phi(\mathbf{Q}) [\phi(\mathbf{K})\tran\mathbf V ],\\
\mathbf{\Delta} &= \mathrm{diag}(\phi(\mathbf{Q}) [\phi(\mathbf{K})\tran {\mathbf 1}_n]). 
\end{aligned}
\end{equation}
where the product of keys and values are computed to avoid the quadratic $n \times n$ matrix.
Existing methods mainly differ in the design of kernel functions.
For example, \citet{choromanski2020rethinking} and \citet{katharopoulos2020transformers} adopt activation function $1+elu$ to process query and key.
\citet{wang2020linformer} 
assumes attention matrices are low-rank. \citet{peng2021random} and \citet{zheng2022linear} approximate softmax under constrained theoretical bounds. \citet{zhen2022cosformer} propose a linear alternative to the attention based on empirical properties of the softmax function. 

These methods focus on either approximating or altering the softmax operator while preserving its properties.
Compared with the vanilla transformer, these methods often trade performance for efficiency, usually resulting in worse task performance.  
In this paper, we argue that there are two essential reasons leading to such a performance gap, discussed in detail as follows.

\begin{figure*}[t!]
   \begin{center}
   {\includegraphics[width=1\linewidth]{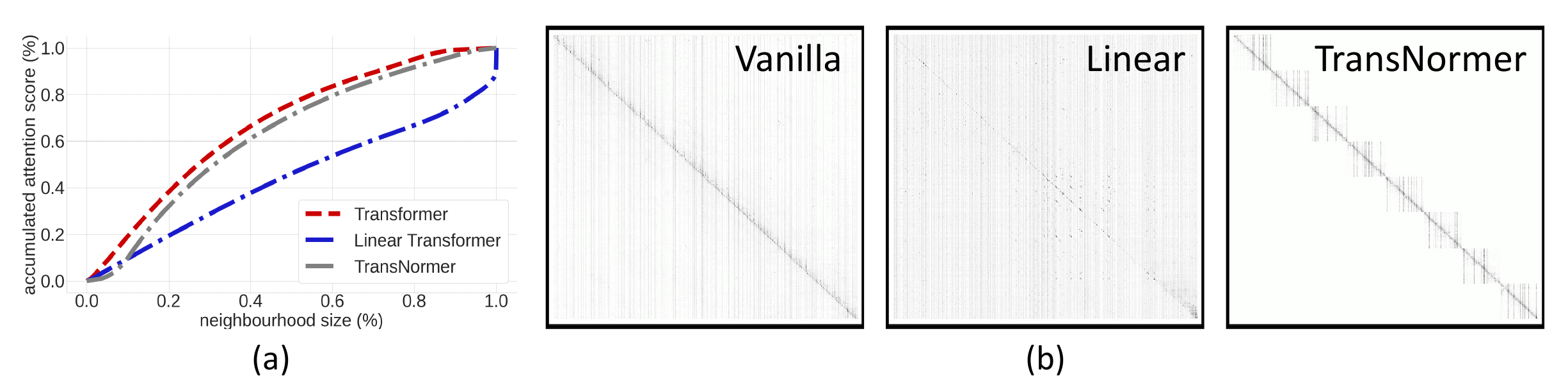}} 
   \end{center}
  \vspace{-3mm}
\caption{(a): Comparison of locally accumulated attention scores of different transformer variants. The x-axis denotes ratio of neighbourhood size relative to the input length; the y-axis denotes accumulated attention scores inside this neighbourhood for the centering token.
The curve for the vanilla transformer model increases more sharply, indicating that the attention scores are more concentrated. Our model greatly alleviates the attention dilution issue for linear models.
(b): Qualitative comparison of attention matrices in early model layers. The proposed \formername produces more similar patterns to the original vanilla transformer, benefiting to better capture local-global language context, while the linear model suffers clearly from the issue of attention dilution and gets distracted by distant tokens in early layers.
}
   \label{fig:attention matrix}
\end{figure*}

\section{The devil in linear attention}
\label{section:drawbacks}
In this section, we motivate the design principles of 
\formername by providing theoretical evidence for the unbounded gradients, and empirical results showing the adverse influence of attention dilution.
\subsection{Unbounded gradients}
Few work on linear transformers analyzes their gradients during training.
Our first key observation is that \emph{kernel-based linear attention suffer from unbounded gradients, causing unstable convergence during training}.
In the following, we highlight the main theoretical results while referring readers to Appendix~\ref{sec:proof} for the full derivation.


Consider a self-attention module, either vanilla or linear attention. Its attention matrix $\mathbf P \in \mathbb R^{n\times n}$ can be represented in the following unified form
\footnote{Here we assume that $f(s_{ij})\ge 0$, the conclusion is satisfied in most cases.}:
\begin{equation}
\small
\label{equation:attention matrix}
 p_{ij}=\frac{f(s_{ij})}{\sum_{k=1}^n f(s_{ik})},
f:\mathbb R \to \mathbb R.
\end{equation}

Vanilla and linear attention differ mainly in their computation of token-wise similarities $s_{ij}$\footnote{Note that $s_{ij}$ is not directly computed in linear attention, but can still be represented in this unified form, see Appendix~\ref{sec:proof} for more detailed derivation}. 
In vanilla attention, $s_{ij}$ is computed as:
\begin{equation}
\small
s_{ij}= \mathbf q_i \tran \mathbf k_j  /\sqrt d,f(x)=\exp(x),
\end{equation}
while for linear attentions, $s_{ij}$ can be decomposed using a kernel function $\phi$, such that:
\begin{equation}
\small
s_{ij}=  \phi(\mathbf q_i) \tran  \phi(\mathbf k_j) ,f(x)=x.
\end{equation}

Given the above definitions, the gradients of the attention matrix $\mathbf{P}$ is derived as:
\begin{equation}
\small
\frac{\partial p_{ij}}{\partial s_{ik}}
= \frac{f'(s_{ik})}{f(s_{ik})}\left(1_{j=k} p_{ij} - p_{ij}p_{ik} \right)
\end{equation}
Therefore, for the vanilla attention,  the partial derivative $\frac{\partial  p_{ij}}{\partial s_{ik}}$ is:
\begin{equation}
\label{eq:gradient1}
\small
\begin{aligned}
f'(x)&= \exp(x)=f(x) \\
\frac{\partial p_{ij}}{\partial s_{ik}}
&=1_{j=k} p_{ij} - p_{ij}p_{ik}\\
&=\begin{cases}
p_{ik} - p_{ik}p_{ik}\in[0, 1/4] & j=k\\
-p_{ij}p_{ik} \in [-1/4, 0] & j\neq k
\end{cases}
\end{aligned}
\end{equation}
and it is bounded as:
\begin{equation}
\small
\label{vanilla}
\left| \frac{\partial p_{ij}}{\partial s_{ik}}\right|  \le \frac 1 4.
\end{equation}
However, for linear attentions, we have:
\begin{equation}
\label{eq:gradient2}
\small
\begin{aligned}
f'(x)&=1\\
\frac{\partial p_{ij}}{\partial s_{ik}}
&=\frac{1}{s_{ik}}\left(1_{j=k} p_{ij} - p_{ij}p_{ik} \right)\\
&=\begin{cases}
\frac{1}{s_{ik}}\left(p_{ik} - p_{ik}p_{ik}\right) & j=k\\
\frac{1}{s_{ik}}(-p_{ij}p_{ik})  & j\neq k
\end{cases}
\end{aligned}
\end{equation}
and\footnote{A detailed proof of the upper bound can be found at Appendix \ref{sec:bound}.}
\begin{equation}
\small
\label{linear}
\left| \frac{\partial p_{ij}}{\partial s_{ik}}\right |  \le \frac 1 {4 |s_{ik}|}.
\end{equation}
Since
$|s_{ik}|^{-1}=  |\phi(\mathbf q_i) \phi(\mathbf k_j) \tran|^{-1}$ can be arbitrarily large, the gradient of linear attention has no upper bound. On the other hand, we can also show that the gradient of linear attention has no lower bound\footnote{The proof can be found in Appendix \ref{sec:linear}.}:
\newtheorem{linear}{Proposition}[section]
\begin{linear}
\label{gradient:linear}
$\forall M>0$, there exists $\mathbf q_i, \mathbf k_j\in \mathbb R^{d}, j=1,\ldots, n $, such that:
\begin{equation}
\left| \frac{\partial p_{ij}}{\partial s_{ik}}\right |> M.
\end{equation}
\end{linear}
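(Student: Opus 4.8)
The plan is to exhibit an explicit family of queries and keys, parameterized by a scalar $t$, for which the bound in Eq.~\eqref{linear} is actually attained up to a constant and the right-hand side $\frac{1}{4|s_{ik}|}$ blows up. Since Eq.~\eqref{linear} only gives an upper bound, I must instead drive the gradient itself large, so the natural strategy is to force $|s_{ik}|\to 0$ while keeping the factor $\bigl(p_{ik}-p_{ik}^2\bigr)$ (in the case $j=k$) bounded away from zero. Concretely, I would take all the keys and the query to lie in a configuration where the similarities $s_{ik}=\phi(\mathbf q_i)\tran\phi(\mathbf k_k)$ are all equal to some small positive value $s$, so that by symmetry every $p_{ik}=1/n$. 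Then the diagonal derivative becomes
\begin{equation}
\frac{\partial p_{ii}}{\partial s_{ii}}=\frac{1}{s}\Bigl(\frac{1}{n}-\frac{1}{n^2}\Bigr)=\frac{n-1}{n^2 s},
\end{equation}
which tends to $+\infty$ as $s\to 0^+$, so for any prescribed $M$ we can choose $s$ small enough to exceed it.

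The key steps, in order, are as follows. First, I would recall from Eq.~\eqref{eq:gradient2} the exact expression $\frac{\partial p_{ij}}{\partial s_{ik}}=\frac{1}{s_{ik}}\bigl(1_{j=k}\,p_{ij}-p_{ij}p_{ik}\bigr)$, and isolate the diagonal term $j=k$ since it carries the nonvanishing factor $p_{ik}(1-p_{ik})$. Second, I would construct explicit vectors realizing a uniform similarity $s$: for instance, choosing $\phi(\mathbf q_i)$ and each $\phi(\mathbf k_j)$ to be equal unit-length vectors scaled so that their pairwise inner product equals $s$, or more simply placing $\phi(\mathbf q_i)=\sqrt{s}\,\mathbf e$ and $\phi(\mathbf k_j)=\sqrt{s}\,\mathbf e$ for a fixed unit vector $\mathbf e$, giving $s_{ik}=s$ for every $k$ and hence $p_{ik}=1/n$ by the normalization in Eq.~\eqref{equation:attention matrix}. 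Third, I would substitute into the derivative to obtain the clean lower bound $\frac{n-1}{n^2 s}$ and observe that as $s\to 0^+$ this exceeds any fixed $M$; one must only check that such $\phi$-images are realizable by genuine $\mathbf q_i,\mathbf k_j\in\mathbb R^d$, which holds since for the identity-type kernels used in linear attention $\phi$ is surjective onto the relevant cone, and in any case the statement is phrased directly in terms of the existence of suitable $\mathbf q_i,\mathbf k_j$.

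The main obstacle I anticipate is the realizability question: the proposition quantifies over $\mathbf q_i,\mathbf k_j$, not directly over the feature vectors $\phi(\mathbf q_i),\phi(\mathbf k_j)$, so I must ensure the chosen small uniform similarity $s$ is attainable by actual inputs under the specific kernel $\phi$ in use, and that making $s$ small does not inadvertently force some $p_{ik}$ to degenerate in a way that cancels the blow-up. The symmetric uniform-similarity construction neatly handles the second concern because it pins every $p_{ik}=1/n$ independently of $s$, so the factor $p_{ik}(1-p_{ik})$ stays fixed at $\frac{n-1}{n^2}$ while $\frac{1}{s}$ diverges; this decoupling is what makes the argument clean. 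A minor point worth verifying is the sign and nonnegativity assumption $f(s_{ij})\ge 0$ noted in the footnote to Eq.~\eqref{equation:attention matrix}, which is why I take $s>0$ rather than letting $s\to 0$ through negative values. With these checks in place, choosing $s<\frac{n-1}{n^2 M}$ completes the argument.
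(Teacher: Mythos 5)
Your proposal is correct and follows essentially the same route as the paper's proof: both construct a symmetric configuration where $\phi(\mathbf q_i)=\phi(\mathbf k_j)$ is a common vector of small norm, so that every $s_{ik}$ equals a small positive scalar, every $p_{ik}=1/n$, and the prefactor $1/s_{ik}$ in Eq.~\eqref{eq:gradient2} diverges as that scalar tends to $0^+$. The paper handles realizability the same way you do, via the assumption (stated in a footnote) that the image of $\phi$ contains vectors arbitrarily close to $\mathbf 0$; your explicit threshold $s<\tfrac{n-1}{n^2M}$ is a minor sharpening of the paper's limiting argument.
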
 
The unbounded gradients lead to less stable optimization and worse convergence results in our preliminary studies.

\subsection{Attention dilution}
It is a known property of vanilla attention to emphasize on neighbouring tokens ~\cite{titsias2016one,zhen2022cosformer}.
However, this property does not directly inherit to the linear transformer variants.
%

To quantify the attention dilution issue, we introduce a metric called \emph{locally accumulated attention} score, which measures how much attention scores are distributed within the local neighbourhood of a particular token.

For an input sequence of length $N$, consider a local neighbourhood $\{x_{start},...,x_i...,x_{end}\}$ \emph{centering} around token $x_i$ of total length $r\cdot N$, with $r$ the ratio relative to the total input, the locally accumulated attention score for token $x_i$ is defined as $l(i,r,N)=p_{i,start}+...+p_{i,end}$.
A higher score indicates the particular attention layer concentrates on the local neighbourhood, while a lower score tends to indicate the issue of attention dilution, where scores are distributed more evenly to local and distant tokens. For example, $l(i, 0.4, N)=0.6$ means that that 40\% of the neighbors around $i$'th token contribute 60\% of the attention score. 

In Fig.~\ref{fig:attention matrix} (a), we compare locally accumulated attention scores (y-axis) for vanilla transformer and linear transformer, with varying sizes of neighbourhood by ratio (x-axis).
We show the average score over each position across the entire sequence.
It can be seen that the area under the vanilla model curve is significantly larger than that of the linear model.
This provides evidence that the vanilla attention is more concentrated locally, while the linear transformer suffers from the issue of attention dilution.
This is further qualitatively supported by Fig.~\ref{fig:attention matrix}~(b), where the attention maps for vanilla model are more concentrated than the linear model.

\section{Method}
Based on the aforementioned observations, we propose a new linear transformer network called \formername that addresses the above two limitations of current linear transformers. The overall architecture is shown in Fig.~\ref{fig:network}. 

\subsection{The overall architecture}
%
Vanilla attention suffers less in attention dilution while linear attention is more efficient and scalable on longer sequences. This motivate us to design a method that exploits the best of the both worlds by using these mechanisms in combined.

Specifically, our network consists of two types of attention: \localattentionname for the early stage of the model and \attentionname for the later stage.
The former addresses the attention dilution issue and the later aims to stabilize training gradients.
Note that by properly reshaping the inputs, the diagonal attention can be efficiently computed in linear space-time, thus preserving the overall linear complexity.
%


\subsection{\attentionname}
\begin{table}[!ht]
    \small
    \setlength{\tabcolsep}{0.9cm}
\caption{\textbf{Ablation of linear attention with scaling operation.} Directly removing scaling operation \ie the denominator in Eq.~\ref{equation:attention matrix}, leads to significant performance drop. Our normalization strategy achieves better result.}
    \label{denorm}
    \centering
    \begin{tabular}{lc}
    \toprule
        method & ppl(val) \\ \midrule
        $1+elu$ & 4.98 \\ 
        $1+elu$ w/o scaling & 797.08 \\
        \attentionname  & 4.94 \\
        \bottomrule
    \end{tabular}
\end{table}

As proved in Sec.~\ref{section:drawbacks}, the scaling operation, \ie the denominator in Eq.~\ref{equation:attention matrix},
in the linear transformers hinder the optimization due to the unbounded gradients. To solve this issue, we propose to remove the scaling operation in the linear transformers. However, as shown in Table.~\ref{denorm}, directly removing the scaling operation leads to critical performance drop since the attention map becomes unbounded in the forward pass.
Therefore, an alternative is required to bound both attention maps during forward and their gradients during backward passes in linear attentions.

Our proposed solution is simple yet effective.
Given a linear attention, the attention without scaling can be formulated as:
\begin{equation}
\small
\mathbf O= \mathbf Q (\mathbf K\tran \mathbf V).
\end{equation}
We empirically find that we can apply an arbitrary normalization on this attention to bound it, which leads to our \attentionname as:
\begin{equation}
\small
\mathbf O_{\text{norm}}= \mathrm{XNorm}(\mathbf Q (\mathbf K\tran \mathbf V)),
\end{equation}
where the $\mathrm{XNorm}$ can be Layernorm\citep{1607.06450} or RMSNorm \citep{zhang-sennrich-neurips19} and \etc~We use the RMSNorm in our experiments as it is slightly faster than other options.

It can be proved that the gradients of \attentionname is bounded by\footnote{The full derivation can be found in Appendix~\ref{sec:proof}.}:
\begin{equation}
\small
\left|\frac{\partial \mathcal L}{\partial s_{ij}} \right| \le \frac{3c_1 c_2d}{2\sqrt{\epsilon}} <\infty,
\end{equation}
where $\mathcal L$ is the loss function, $\epsilon$ is the small constant that used in RMSNorm, $d$ is the embedding dimension and
\begin{equation}
\small
\begin{aligned}
c_1&= \max_{i=1}^n \| \nabla_{\mathbf O_i}\mathcal L \|_2 < \infty \\
c_2&= \max_{i=1}^n \|\mathbf V_i \|_2 <\infty
\end{aligned}
\end{equation}

To demonstrate the gradients stability of the \attentionname, we compare the relative standard deviation of gradients during each training iterations to other linear transformers and vanilla transformer.
Specifically, we train our model for 50k iterations with RoBERTa architecture on the WikiText103~\cite{merity2017pointer} and obtain the relative standard deviation of all iterations' gradients. 
As shown in Table~\ref{table: gradients}, existing linear methods \cite{choromanski2020rethinking,katharopoulos2020transformers} have substantially higher deviations compared to vanilla attention, which leads to inferior results.
The \attentionname produces more stable gradients, which validates the effectiveness of our method.

\begin{table}[!ht]
    \caption{\textbf{Relative standard deviation of training gradients over 50k iterations.} Our proposed \attentionname provides more stable gradients which are closer to vanilla transformer.}
    \centering
    \small
    \label{table: gradients}
    \begin{tabular}{lc}
    \toprule
        method &  \makecell{Relative Standard \\ Deviation} \\
        \midrule
        $1+elu$ \cite{katharopoulos2020transformers} & 0.58 \\ 
        Performer\cite{choromanski2020rethinking} & 0.47 \\ 
        Vanilla\cite{vaswani2017attention} & 0.25 \\
        \attentionname & 0.20 \\ 
        \bottomrule
    \end{tabular}
\end{table}

\begin{figure}[!t]
   \begin{center}
   {\includegraphics[width=0.95\linewidth]{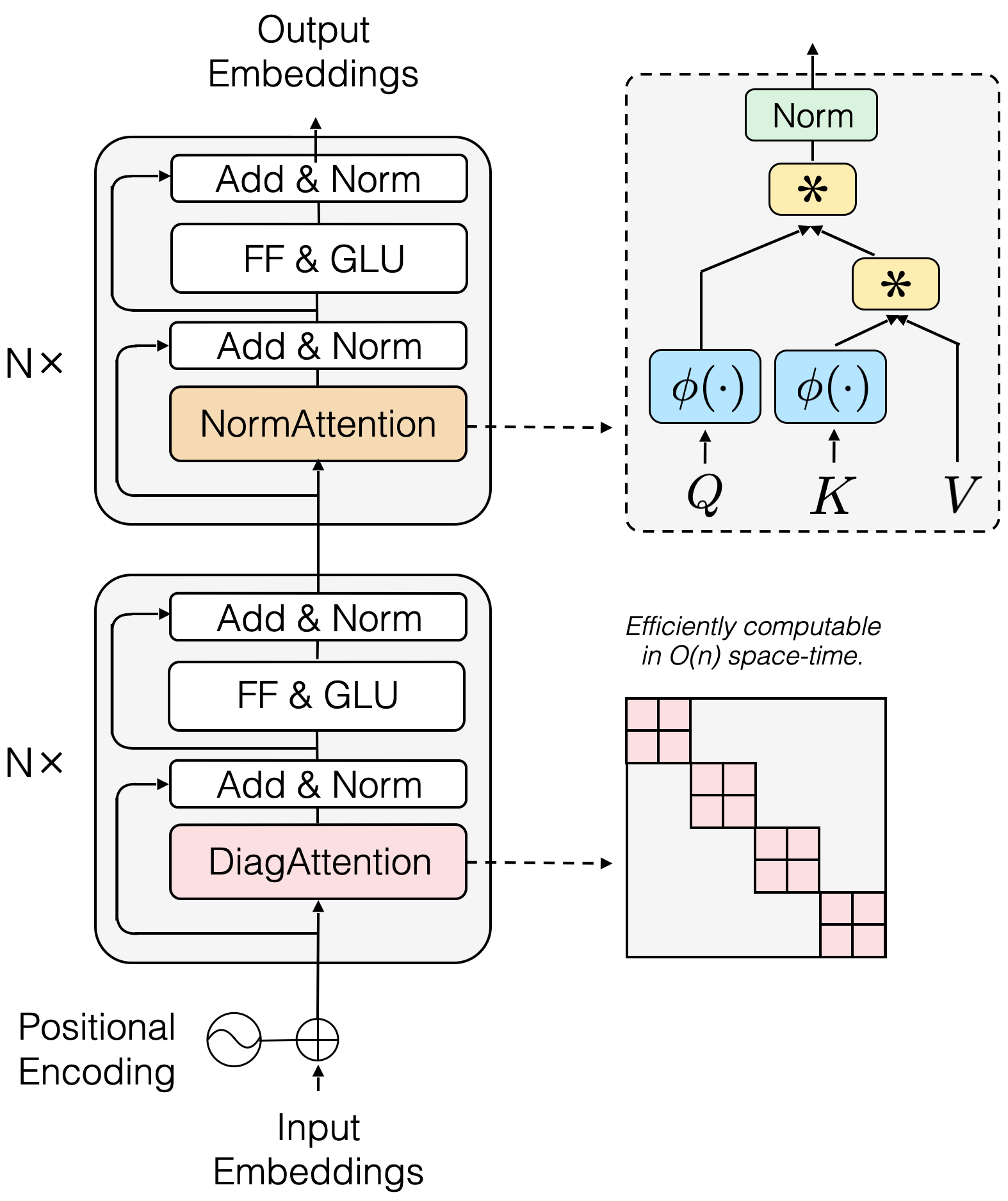}} 
   \end{center}
\caption{Architecture overview of the proposed \formername. In the early stages, we leverage \localattentionname, where attention is only calculated inside the blocks to enforce neighbouring focus. In late stages,  \attentionname assists to obtain a more stable gradients in linear complexity.}
   \label{fig:network}
\end{figure}



\subsection{\localattentionname}

To better understand the design principles, we show in Table~\ref{table pilot locality} that by replacing partial layers of linear transformers with vanilla attention, the performance on language modeling is evidently improved.
The results also suggest that capturing more local information in early layers are more helpful than otherwise.

\begin{table}
\centering 
\small
\setlength{\tabcolsep}{5mm}
\caption{\textbf{Ablation on attention dilution issue.} We implement all structures under the same setting: Vanilla \cite{vaswani2017attention}, $1+elu$ \cite{katharopoulos2020transformers}.
}
\begin{tabular}{ccc}
    \toprule
     \makecell{\textbf{Early layers}} & \makecell{\textbf{Later layers}} & \textbf{ppl} (val) \\ 
    \midrule
        $1+elu$ & $1+elu$ & 4.98 \\ 
        $1+elu$ & Vanilla & 3.90 \\
        Vanilla  & $1+elu$ & 3.76\\ 
  \bottomrule
\end{tabular}
  \label{table pilot locality}
\end{table}


To this end, we leverage \emph{none-overlapped} block-based strategy to reduce the space-time complexity of the vanilla attention. Based on the observation in Fig.~\ref{fig:attention matrix}, we utilize a strict diagonal blocked pattern to constraint the attention in a certain range. Since the attentions are calculated inside each block, the computation complexity of our diagonal attention is $O(nwd)$, where $n$ is sequence length , $w$ is the block size and $d$ is feature dimension. When $d\ll n$, the complexity scales linearly respect to the sequence length $n$. In subsequent sections, we use \localattentionname to refer to Diagonal attention.

We empirically find that applying \localattentionname to the later stages of a model hurts the performance as shown in Table.~\ref{order_v1}. 
It indicates that the model requires a global field of view in the later layers, which also justifies our choices of \attentionname in later layers of \formername.

\section{Experiments}
\label{sec:exp}
In this section, we compare our method to other linear transformers and the vanilla transformer on autoregressive language modeling, bidirectional language modeling as well as the Long Range Arena benchmark~\citep{tay2020long}. We also provide an extensive ablation study to vindicate our choice in designing the \formername.

We validate our method on two variants of the \formername. The \formername T1 uses the ReLA attention~\citep{zhang-etal-2021-sparse} in the \localattentionname and the \emph{elu} as the activation function in the \attentionname. The \formername T2 uses the $\mathtt{Softmax}$ attention~\citep{vaswani2017attention} in the \localattentionname and the \emph{1+elu} as the activation function in the \attentionname.


For experiments, we first study the autoregressive language modeling on WikiText-103 \citep{merity2017pointer} in section \ref{sec:alm}. 
Then in section \ref{sec:blm} we test our method on bidirectional language modeling, which is pre-trained on WikiText-103 \citep{merity2017pointer} and then fine-tuned on several downstream tasks from the GLUE benchmark \citep{wang2018glue}. 
Finally, we test \formername on the Long-Range Arena benchmark \citep{tay2020long} to evaluate its ability in modeling long-range dependencies and efficiencies in section \ref{sec:lra}.

\subsection{Settings}
We implement our models in the \textit{Fairseq} framework \citep{ott2019fairseq} and train them on 8 V100 GPUS. We use the same training configuration for all competitors and we list detailed hyper-parameters in Appendix \ref{lm:config}. We choose the FLASH-quad, FLASH \citep{hua2022transformer}, Transformer-LS \citep{zhu2021longshort}, Performer \citep{choromanski2020rethinking}, 1+elu \citep{katharopoulos2020transformers} as our main competing methods.

For the autoregressive language modeling, we use 6 decoder layers (10 layers for the FlASH/FLASH-quad) as our base model and all models are trained on the WikiText-103 dataset \citep{merity2017pointer} for 100K steps with a learning rate of $0.005$. We use the perplexity (PPL) as the evaluation metric. 

For the bidirectional language modeling, we choose the RoBERTa base \citep{liu2019RoBERTa} for all methods. It consists of 12 encoder layers (24 layers for the FLASH and FLASH-quad to match the number of parameters). All models are pre-trained on the WikiText-103~\citep{merity2017pointer} for 50K steps with lr=0.005 and fine-tuned on the GLUE dataset~\citep{wang2018glue}. We use different learning rates among {1e-5, 3e-5, 6e-5, 1e-4} and choosing the best result after fine-tuning for 3 epochs. 

For the Long-Range Arena benchmark, to make sure it reflect the practical speed in Pytorch platform, we re-implement the benchmark in Pytorch. We adopt the same configuration from the Skyformer~\cite{Skyformer} and make sure all models have a similar parameter size. We use the same training hyper parameters for all models as well. 


\begin{table}[!ht]
    \centering
    \small
    \setlength{\tabcolsep}{0.11cm}
    \caption{\textbf{Quantitative results in autoregressive language modeling.} The best result is highlighted with \textbf{bold} and the second with \underline{underlined}. The smaller the better for the PPL metric. LS stands for transformer-LS. 
    }
    \label{lm:res}
    \begin{tabular}{lccc}
    \hline
        Method & PPL (val) & PPL (test) & Params (m) \\ 
        \hline
        Vanilla & \underline{29.63} & \textbf{31.01} & 156.00 \\
        \hline
        LS & 32.37 & 32.59 & 159.46 \\ 
        FLASH-quad & 31.88 & 33.50 & 153.51 \\
        FLASH & 33.18 & 34.63 & 153.52 \\ 
        1+elu & 32.63 & 34.25 & 156.00 \\ 
        Performer & 75.29 & 77.65 & 156.00 \\ \hline
        \formername T1 & 29.89 & \underline{31.35} & 155.99 \\ 
        \formername T2 & \textbf{29.57} & \textbf{31.01} & 155.99 \\ \hline
    \end{tabular}
\end{table}

\begin{table*}[!ht]
    \centering
    \setlength{\tabcolsep}{0.3cm}
    \small
    \caption{\textbf{Quantitative results of the GLUE benchmark.} MNLI is reported by the match/mismatch splits. MRPC is reported by F1 score. CoLA is reported by Matthews correlation coefficient. All the other tasks are measured by the accuracy. LS stands for transformer-LS. The best result is highlighted with \textbf{bold} and the second with \underline{underlined}. The larger the better for all metrics. "-" means unconverged.}
    \label{glue:res}
    \begin{tabular}{l|ccccccccc}
    \hline
        Method &  MNLI &  QNLI &  QQP &  SST-2 &  MRPC &  CoLA & AVG & Params (m)\\ \hline
        Vanilla & 79.37/79.07 & 87.79 & 88.04 & 90.25 & 88.35 & 38.63 & \underline{78.79} & 124.70\\
        \hline
        FLASH-quad& 78.71/79.43 & 86.36 & 88.95 & 90.94 & 81.73 & 41.28 & 78.20 &127.11\\ 
        FLASH & 79.45/{80.08} & 87.10 & 88.83 & 90.71 & 82.50 & 29.40 & 76.87 &127.12\\ 
        LS & 77.01/76.78 & 84.86 & 86.85 & 90.25 & 82.65 & 40.65 & 77.01 & 128.28\\ 
        Performer & 58.85/59.52 & 63.44 & 79.10 & 81.42 & 82.11 & 19.41 & 63.41 & 124.70\\ 
        1+elu & 74.87/75.37 & 82.59 & 86.9 & 87.27 & 83.03 & - & 70.00 & 124.0\\ 
        \hline
        \formername T1 & 79.06/79.93 & 87.00 & 88.61 & 91.17 & 84.50 & 45.38 & \textbf{79.38} & 124.67  \\ 
        \formername T2 & 77.28/78.53 & 85.39 & 88.56 & 90.71 & 85.06 & 45.90 & 78.78 & 124.67\\ \hline
    \end{tabular}
\end{table*}

\begin{table*}[!ht]
    \centering
    \setlength{\tabcolsep}{0.5cm}
    \small
    \caption{\textbf{Quantitative results on the Long-Range Arena benchmark.} The best result is highlighted with \textbf{bold} and the second with \underline{underlined}. The larger the better for all metrics.}
    \begin{tabular}{l|llllll}
    \hline
        Model & Text & ListOps & Retrieval & Pathfinder & Image & AVG. \\ \hline
        Transformer & 61.95 & 38.37 & 80.69 & 65.26 & 40.57 & 57.37 \\ \hline
        Kernelized Attention & 60.22 & 38.78 & 81.77 & 70.73 & 41.29 & 58.56 \\ 
        Nystromformer & 64.83 & 38.51 & 80.52 & 69.48 & 41.30 & 58.93 \\ 
        Linformer & 58.93 & 37.45 & 78.19 & 60.93 & 37.96 & 54.69 \\ 
        Informer & 62.64 & 32.53 & 77.57 & 57.83 & 38.10 & 53.73 \\ 
        Performer & 64.19 & 38.02 & 80.04 & 66.30 & 41.43 & 58.00 \\ 
        Reformer & 62.93 & 37.68 & 78.99 & 66.49 & 48.87 & 58.99 \\ 
        BigBird & 63.86 & 39.25 & 80.28 & 68.72 & 43.16 & 59.05 \\ 
        Skyformer & 64.70 & 38.69 & 82.06 & 70.73 & 40.77 & 59.39 \\ 
        LS & 66.62 & 40.30 & 81.68 & 69.98 & 47.60 & 61.24 \\ 
        cosFormer & \underline{67.70} & 36.50 & 83.15 & 71.96 & \underline{51.23} & 62.11 \\
        FLASH-quad & 64.10 & \textbf{42.20} & 83.00 & 63.28 & 48.30 & 60.18 \\ 
        FLASH & 64.10 & 38.70 & \textbf{86.10} & 70.25 & 47.40 & 61.31 \\ 
        \hline
        \formername T1 & 66.90 & 41.03 & 83.11 & \underline{75.92} & \textbf{51.60} & \underline{63.71} \\ 
        \formername T2 & \textbf{72.20} & \underline{41.60} & \underline{83.82} & \textbf{76.80} & 49.60 & \textbf{64.80} \\ \hline
    \end{tabular}
    \label{table: lra res}
\end{table*}

\subsection{Results}
\paragraph{Autoregressive language modeling}
\label{sec:alm}
We report the results in Table \ref{lm:res}. It can be found that both \formername variants get comparable or better perplexity to the vanilla attention and outperform all existing linear models with a clear margin. 
For example, compared to previous state-of-the-art linear methods on validation set\cite{hua2022transformer} and test set\cite{zhu2021longshort}, \formername T2 achieves substantially lower perplexity by 2.31 and 1.58 respectively. 
It demonstrates the effectiveness of our method in causal models.  


\begin{table*}[!ht]
    \centering     
    \setlength{\tabcolsep}{0.25cm}
    \small
    \caption{\textbf{Speed comparison on Long-Range Arena benchmark.} We mark it with a dash if a method  exhausts GPU memory. The higher the better for all metrics. The \textbf{1K},...,\textbf{5K} represent the input sequence length.}
    \begin{tabular}{l|lllll|lllll} 
    \hline                
    & \multicolumn{5}{c|}{Inference Speed(steps per sec)} & \multicolumn{5}{c}{Train Speed(steps per sec)}  \\ \hline 
    model & \textbf{1K} & \textbf{2K} & \textbf{3K} & \textbf{4K} & \textbf{5K} & \textbf{1K} & \textbf{2K} & \textbf{3K} & \textbf{4K} & \textbf{5K} \\ \hline 
    Transformer    & 39.06 & 10.05 & -  & -  & -   & 15.34  & 3.05 & -  & -   & -  \\ \hline
    FLASH-quad     & 44.64 & 16.45 & 9.40 & 6.54 & 5.39 & 19.84 & 8.47 & 5.19 & 3.59 & 2.92 \\ 
    FLASH          & 40.32 & 23.15 & 16.89 & 14.04 & 13.16 & 20.49 & 11.06 & 8.47 & 7.23 & 6.93 \\ 
    LS      & 32.05 & 17.36 & 12.14 & 10.16 & 9.06 & 15.43 & 8.68 & 6.28 & 5.24 & 4.76  \\ 
    Performer      & 104.17 & 56.82 & 42.37 & 33.78 & 31.25 & 28.41 & 16.23 & 12.02 & 10.04 & 9.06 \\ 
    cosFormer      & 86.21 & 46.30 & 32.47 & 27.47 & 25.00 & 22.94 & 12.82 & 9.19 & 7.79 & 7.14  \\ 
    Linformer      & 104.17 & 58.14 & 40.32 & 31.25 & 26.32 & 27.17 & 15.63 & 11.26 & 8.77 & 7.42 \\ 
    Reformer       & 78.13 & 38.46 & 26.04 & 19.84 & 16.23 & 20.16 & 10.87 & 7.46 & 5.69 & 4.70 \\ 
    Nystorm        & 58.14 & 38.46 & 29.07 & 23.81 &  20.33  & 14.12 & 9.62 & 7.46 & 6.11 & 5.26 \\ \hline
    \formername T1 & 113.64 & 65.79 & 46.30 & 39.06 & 35.71 & 28.41 & 17.12 & 12.76 & 10.87 & 10.12 \\
    \formername T2 & 119.05 & 65.79 & 47.17 & 39.68 & 36.23 & 29.41 & 17.24 & 12.95 & 10.96 & 10.16  \\ 
    \hline
    \end{tabular} 
    \label{speed:res}
\end{table*}

\paragraph{Bidirectional language modeling}
\label{sec:blm}
We show our bidirectional results on the GLUE benchmark in Table.~\ref{glue:res}.
Our method achieves superior performance to all the competing methods in average. 
On three tasks, \ie SST-2, MRPC, CoLA, \formername reports comprehensively better results than all competing linear methods, such as 4.62 higher on CoLA.
Further, one of our variants \ie \formername T1,  even outperforms the vanilla attention with a notable margin. It proves the effectiveness of our method in bidirectional language modeling.
\paragraph{Long Range Arena Benchmark}
\label{sec:lra}
The results before the transformer Long-short (abbr. LS) are taken from the Skyformer~\cite{Skyformer}.
As shown in Table.~\ref{table: lra res}, we achieve either first or second places across all five tasks. 
In terms of overall results, both \formername variants (T1,T2)
outperform all other competing methods including vanilla transformer~\cite{vaswani2017attention}, which validates our capability to encode long sequences.

\subsection{Speed comparison}
We compare the training and inference speed of the \formername with other methods. 
For a fair and comprehensive comparison, we follow exactly the same configurations of the Skyformer\cite{Skyformer} and report step per second under different sequence lengths. Timing is conducted on a Nvidia A6000 GPU with 48G GPU memory. 
Table.~\ref{speed:res} suggests that the vanilla transformer is substantially slow and exhausts GPU memory with sequence longer than 3k.
Compared to other efficient transformers, our \formername achieves faster speed with comparable GPU memory footprints, 
while competing efficient methods all report worse results compared to our \formername.
For instance, compared to FLASH-quad~\cite{hua2022transformer} that achieves previous best linear results on both autoregressive and bidirectional benchmarks, our model performs over 300\% faster during training and 150\% faster during inference.

\subsection{Ablation study}
In this section, we justify our design choice of the \formername, including , the selection of the FFN module, and the size of the attention block in \localattentionname. We use the PPL from the Roberta pre-training stage as our evaluation metric.


\paragraph{Structure design}
\begin{table}[h]
    \centering
    \setlength{\tabcolsep}{0.2cm}
    \small
    \vspace{-1mm}
    \caption{\textbf{Ablation of the proportion of the attentions.} We empirically find that the balanced structure achieves the best result. We abbreviate the \localattentionname as BlockAtt and \attentionname as NormAtt.}
    \begin{tabular}{c c|c c}
    \hline
        \makecell{Early stage \\ BlockAtt} & \makecell{Later stage \\ NormAtt} & T1 ppl(val)  & T2 ppl(val)\\ \hline
        0 & 12 & 4.23 & 4.48\\ 
        3 & 9 & 4.13 & 3.83\\ 
        6 & 6 & \textbf{3.82} & \textbf{3.81}\\
        9 & 3 & 3.87 & 3.86\\ 
        12 & 0 & 4.75 & 4.66\\
         \hline
    \end{tabular}
    \vspace{-1mm}
    \label{table:percent_v1}
\end{table}
\begin{table}[h]
    \caption{\textbf{Ablation of the order of two proposed attention.} Using \localattentionname in the early stage achieves better results than using it on later stage.}
    \centering
    \setlength{\tabcolsep}{0.2cm}
    \small
    \begin{tabular}{ll|cc}
    \hline
        Early stage & Later stage & T1 ppl(val) & T2 ppl(val) \\ \hline
        NormAtt & BlockAtt & 4.13 &  4.21 \\ 
        BlockAtt & NormAtt & \textbf{3.82} & \textbf{3.81} \\ \hline
    \end{tabular}
    \label{order_v1}
\end{table}
As aforementioned, we empirically choose the first 6 layers as the early stage of the model and the rest as the later stage. We provide the designing ground for this choice in Table.~\ref{table:percent_v1}. It can be also observed that either choosing the \localattentionname or \attentionname for the entire model will lead to inferior performance. We also provide the ablation results of swapping the order of the  \localattentionname and the \attentionname in Table.~\ref{order_v1}. Using \localattentionname in the early stage achieves significantly better results than using it on later stage. It further proves our claim that the early stage focuses on neighbouring tokens while the later stage needs long-range attentions.
\paragraph{FFN module}
\begin{table}[h]
\vspace{-3mm}
    \caption{\textbf{Ablation of the selection of the FFN modules.} The GLU leads to better results.}
    \label{ffn_v1}
    \centering
    \setlength{\tabcolsep}{0.5cm}
    \small
    \begin{tabular}{l|cc}
    \hline
        FFN type & T1 ppl(val) & T2 ppl(val)\\ \hline
        FFN & 3.93 & 3.93\\ 
        GLU(ours) & \textbf{3.82} & \textbf{3.81} \\ \hline
    \end{tabular}
\end{table}

We ablate the selection of the FFN modules in Table.~\ref{ffn_v1}. Compared with the traditional FFN~\citep{vaswani2017attention}, the GLU~\citep{2002.05202} achieves better results.

\paragraph{Block size}
\begin{table}[h]
    \caption{\textbf{Ablation of on block sizes in the \localattentionname.} The larger block size the better results.}
    \label{chunk_v1}
    \centering
    \setlength{\tabcolsep}{0.45cm}
    \small
    \begin{tabular}{c|cc}
    \hline
        Block size & T1 ppl(val) & T2 ppl(val)\\ \hline
        32 & 3.92 &  3.90 \\ 
        64 & 3.82 & 3.81 \\
        128 & \textbf{3.72} & \textbf{3.69} \\ 
         \hline
    \end{tabular}
\end{table}
From the Table.~\ref{chunk_v1}, we observe clear performance improvements with increased block sizes. 
However, since the complexity of the \localattentionname is $O(nwd)$,  larger block size $w$ leads to heavier computational overhead.
We choose a block size as 64 as a trade-off between performance and computational cost.

\paragraph{Combination of attentions}
Finally, we study the effect that whether we should use both attentions in one layer.
In particular, we compare either to 1) use \localattentionname and \attentionname sequentially in a layer with different orders; or to 2) use them in parallel in each attention layer and then concatenate their embedding output.
Table.~\ref{approch_v1} shows that we should not use these attentions sequentially within a layer and apply them in parallel will double the computation complexities without improving the performance.
\begin{table}[h]
    \caption{\textbf{Ablation of the combination of two proposed attention.} In first two rows, the two attention layers appear in an interleaved manner. D for the \localattentionname and N for the \attentionname.}
    \centering
    \setlength{\tabcolsep}{0.25cm}
    \small
    \begin{tabular}{l|cc}
    \hline
        approach & T1 ppl(val) & T2 ppl(val)\\ \hline
        altering D$\rightarrow$N & 4.19 & 4.23\\ 
        altering N$\rightarrow$D & 4.11 & 4.21 \\ 
        parallel & \textbf{3.77} & 3.82\\ 
        \formername & 3.82  & \textbf{3.81}\\ \hline
    \end{tabular}
    \label{approch_v1}
\end{table}

\section{Conclusion}
In this paper, we identified two key issues that cause the inferior performance of existing linear transformer models: 1) unbounded gradients; 2) attention dilution. For the former issue, we proposed a new \attentionname to stabilize the training gradients. For the latter, we develop \localattentionname to force the model concentrate attention in neighbouring tokens.
The resultant model \formername marries the strength of the vanilla transformers and the linear transformers, outperforming competing linear transformers on both autoregressive and bidirectional language modeling, text classification tasks and the challenging Long-range arena benchmark.

\section*{Limitations}
In this paper, we identified two main issues of current linear transformers and provided a comprehensive analysis in natural language processing tasks. However, with the booming development of vision transformers, whether they share the same issues of linear NLP transformers is yet to be discovered. We will validate our method on the linear vision transformers in our future work.  


\section*{Ethics Statement}
The proposed technique is beneficial to develop large-scale environment-friendly language models by reducing computing resource demand.
Corpus used to train the model is from public web sources, which may contain biased, explicit or improper content.
Further assessment and regulation have to be in-place before deploying the model in practice.


\bibliography{anthology,custom}
\bibliographystyle{acl_natbib}

\appendix
\noindent\large{\textbf{Appendix}}
\section{Mathematical Notations}
We use bold uppercase letters for matrices($\mathbf M$), bold lowercase letters for vectors($\mathbf m$), and lowercase letters for scalars($m_{ij}$).
We represent all vectors as column vectors and denote the $i$th row of matrix $\mathbf M$ by $\mathbf m_i^\top$ or $\mathbf M_i$. We use $\|.\|_2$ to denote the $l_2$ norm and $\|.\|_F$ to denote the Frobenius norm of the matrix and the vector.

The main mathematical symbols are input $\mathbf X \in \mathbb R^{n\times d}$, $\mathbf Q$ (\textbf{Query}), $\mathbf K$ (\textbf{Key}) 
and $\mathbf V$ (\textbf{Value}), which has the following form:
\begin{equation}
\begin{aligned}
\mathbf{X}
&=\left[
 \begin{matrix}
\mathbf{x}_1\tran \\
\vdots\\
\mathbf{x}_n\tran
  \end{matrix}
  \right] \in \mathbb R^{n\times d},\\
\mathbf{Q}
&=\left[
 \begin{matrix}
\mathbf{q}_1\tran\\
\vdots\\
\mathbf{q}_n\tran
  \end{matrix}
  \right] =\mathbf{XW}_Q=\left[
 \begin{matrix}
\mathbf{x}_1\tran\mathbf{W}_Q\\
\vdots\\
\mathbf{x}_n\tran\mathbf{W}_Q
  \end{matrix}
  \right] \in \mathbb R^{n\times d}, \\
 \mathbf{K}&= \left[
 \begin{matrix}
 \mathbf{k}_1\tran\\
\vdots\\
 \mathbf{k}_n\tran
  \end{matrix}
  \right] 
  =\mathbf{XW}_K=\left[
 \begin{matrix}
\mathbf{x}_1\tran\mathbf{W}_K\\
\vdots\\
\mathbf{x}_n\tran\mathbf{W}_K
  \end{matrix}
  \right] \in \mathbb R^{n\times d}, \\
\mathbf{V}&=\left[
 \begin{matrix}
\mathbf{v}_1\tran\\
\vdots\\
\mathbf{v}_n\tran
  \end{matrix}
  \right] 
  =\mathbf{XW}_V=\left[
 \begin{matrix}
\mathbf{x}_1\tran\mathbf{W}_V\\
\vdots\\
\mathbf{x}_n\tran\mathbf{W}_V
  \end{matrix}
  \right] \in \mathbb R^{n\times d}, 
\end{aligned}
\end{equation}
where $\mathbf{W}_Q,\mathbf{W}_K,\mathbf{W}_V \in \mathbb{R}^{d\times d}$.

\section{Proof of gradients' upper bound}
\label{sec:bound}
In this part, we will proof the bound in \eqref{eq:gradient1} and \eqref{eq:gradient2}, all we need to prove is:
\begin{equation}
0\le p_{ik}(1-p_{ik}) \le \frac 1 4,
0 \le p_{ij}p_{ik} \le \frac 1 4. 
\end{equation}
We adopt the theorem that geometric mean is bounded by arithmetic mean, \ie
\begin{equation}
    \sqrt{ab}\le \frac{a+b}{2}
    \Longleftrightarrow
    ab\le \left(\frac{a+b}{2}\right)^2, \forall a, b \ge 0 .
\end{equation}
We take $a=p_{ik}, b=1-p_{ik}$ to complete the proof.
The first bound can be proven by:
\begin{equation}
0\le p_{ik}(1-p_{ik})\le \left(\frac{p_{ik}+1 -p_{ik}}{2}\right)^2=\frac 1 4.
\end{equation}
For the second bound, we first use the fact that:
\begin{equation}
0\le p_{ij}+p_{ik}\le 1 \Rightarrow p_{ij} \le 1 - p_{ik}.
\end{equation}
So we have:
\begin{equation}
  0\le p_{ij }p_{ik}\le (1-p_{ik})p_{ik} \le \frac{1}{4}.
\end{equation}

\section{Proof of Proposition~\ref{gradient:linear}}
\label{sec:linear}
\begin{proof}[Proof of Proposition~\ref{gradient:linear}]
$\forall \epsilon > 0$ and kernel function $\phi $, let\footnote{We assume that the image of $\phi$ contains vectors arbitrary close to $\mathbf 0$, which is a common case in kernel function.}:
\begin{equation}
\begin{gathered}
\mathbf q_i=\mathbf k_j =\phi^{-1}(\mathbf x_0 ),\\
0<\|\mathbf x_0\|_2 \le \sqrt{\epsilon}, i,j=1,\ldots, n.
\end{gathered}
\end{equation}
Then
\begin{equation}
   \phi(\mathbf q_i)= \phi(\mathbf k_j)=\mathbf x_0, i,j =1, ..., n. 
\end{equation}
So
\begin{equation}
s_{ij}=\phi(\mathbf q_i)\tran \phi(\mathbf k_j) ={\mathbf x_0}\tran {\mathbf x_0}\in (0, \epsilon],
\end{equation}
and 
\begin{equation}
p_{ij}=\frac{s_{ij}}{\sum_{k=1}^n s_{ik}}
=\frac{{\mathbf x_0}\tran {\mathbf x_0}}{\sum_{k=1}^ n{\mathbf x_0}\tran {\mathbf x_0}}
=\frac 1 n.
\end{equation}
According to \eqref{eq:gradient2}, we have:
\begin{equation}
\begin{aligned}
 \frac{\partial p_{ij}}{\partial s_{ik}} 
&=\begin{cases}
\frac{1}{{\mathbf x_0}\tran {\mathbf x_0}}\frac 1 n(1-\frac 1 n)  & j=k\\
-\frac{1}{{\mathbf x_0}\tran {\mathbf x_0}}\frac 1 {n^2}  & j\neq k
\end{cases},\\
\left|  \frac{\partial p_{ij}}{\partial s_{ik}} \right|
&=\begin{cases}
\frac{1}{{\mathbf x_0}\tran {\mathbf x_0}}\frac 1 n(1-\frac 1 n)  & j=k\\
\frac{1}{{\mathbf x_0}\tran {\mathbf x_0}}\frac 1 {n^2}  & j\neq k
\end{cases}\\
&\ge \begin{cases}
\frac{1}{\epsilon n}(1-\frac 1 n)  & j=k\\
\frac{1}{\epsilon n^2}  & j\neq k
\end{cases}.
\end{aligned}
\end{equation}
Let $\epsilon \to 0^+$, then $\frac 1 {\epsilon n^2},\frac 1{\epsilon n}(1-\frac 1 n)\to \infty$, so $\left| \frac{\partial p_{ij}}{\partial s_{ik}} \right|\to \infty$.

\end{proof}

\section{Analyze the gradient of each method}
\label{sec:proof}
In this section, let's consider a one-layer Transformer, for a multi-layer Transformer, we can prove our conclusion using induction.

We begin this section by introducing some mathematical notations. 

\subsection{Notations}
In vanilla attention, we have:
\begin{equation}
\begin{aligned}
\mathbf{S}&=\mathbf Q \mathbf K\tran \in \mathbb R^{n\times n} , \\
\mathbf{P}&= \mathrm{Softmax}(\mathbf S) \in \mathbb R^{n\times n} , \\
\mathbf O &= \mathbf{P} \mathbf V \in \mathbb R^{n\times d} .
\end{aligned}
\end{equation}
In linear attention, we have:
\begin{equation}
\begin{aligned}
\mathbf{S}&=\phi(\mathbf Q) \phi(\mathbf K)\tran \in \mathbb R^{n\times n},\\
\mathbf{\Delta} &= \mathrm{diag}(\mathbf S {\mathbf 1}_n) \in \mathbb R^{n\times n},\\
\mathbf {P} &= \mathbf \Delta^{-1} \mathbf{S}  \in \mathbb R^{n\times n},\\
\mathbf{O}&=\mathbf {P} \mathbf V \in \mathbb R^{n\times d}.\\
\end{aligned}
\end{equation}
Although this term is not calculated in linear attention, we discuss it conceptually.
Note that the above formulations can be unified into the following form \footnote{Here, the function $f(\mathbf X)$ is applied element-wise to the matrix $\mathbf X\in \mathbb R^{n\times m}$, that is, $[f(\mathbf X)]_{ij}=[f(x_{ij})]$}:
\begin{equation}
\label{vanilla_linear}
\begin{aligned}
\mathbf{S}&=f(\psi(\mathbf Q) \psi(\mathbf K)\tran)  \in \mathbb R^{n\times n},\\
\mathbf{\Delta} &= \mathrm{diag}(\mathbf S {\mathbf 1}_n)  \in \mathbb R^{n\times n},\\
\mathbf {P} &= \mathbf \Delta^{-1} \mathbf{S}  \in \mathbb R^{n\times n},\\
\mathbf{O}&=\mathbf {P} \mathbf V \in \mathbb R^{n\times d},
\end{aligned}
\end{equation}
where in vanilla attention, we have:
\begin{equation}
\begin{aligned}
\psi(\mathbf x)=\mathbf x, f(x)= \exp(x), 
\end{aligned}
\end{equation}
and in linear attention, we have:
\begin{equation}
\begin{aligned}
\psi(\mathbf x)=\phi(\mathbf x) , f(x)= x. 
\end{aligned}
\end{equation}

In \attentionname, we have:
\begin{equation}
\begin{aligned}
\mathbf{S}&=\phi(\mathbf Q) \phi(\mathbf K)\tran  \in \mathbb R^{n\times n},\\
\mathbf{T}&= \mathbf S \mathbf V  \in \mathbb R^{n\times d},\\
\mathbf O&= \mathrm{RMSNorm} (\mathbf{T})\\
&\triangleq 
\left[
 \begin{matrix}
 \mathrm{RMSNorm}(\mathbf{t}_1)\tran \\
\vdots\\
 \mathrm{RMSNorm}(\mathbf{t}_n)\tran
  \end{matrix}
  \right] 
\in \mathbb R^{n\times d},
\end{aligned}
\end{equation}
where $\mathrm{RMSNorm}$ is defined as follows:
\newtheorem{def:rms}{Definition}[section]
\begin{def:rms}
\begin{equation}
\begin{aligned}
\mathrm{RMSNorm}(\mathbf x) 
&= \frac{\mathbf x}{\sqrt{\sigma^2 +\epsilon}}, \\
\sigma^2&= \frac{\sum_{i=1}^d x_i^2}{d} ,\\
\epsilon & > 0, \\
\mathbf x &\in \mathbb R^d.
\end{aligned}
\end{equation}
\end{def:rms}
In the subsequent discussion, we define gradient $\nabla_{\mathbf M} \mathcal L$ as:
\newtheorem{def:grad}[def:rms]{Definition}
\begin{def:grad}
\begin{equation}
[\nabla_{\mathbf M} \mathcal L]_{ij}=\frac{\partial \mathcal L}{\partial m_{ij}},
\end{equation}
where $\mathcal L$ stands for loss function, $\mathbf M$ is a parameter matrix.
\end{def:grad}
Then we define the mapping $h$ as:
\newtheorem{def:h}[def:rms]{Definition}
\label{def:h}
\begin{def:h}
\begin{equation}
\begin{gathered}
h:  \mathbb R^{n\times m} \to \mathbb R, h(\mathbf X)=\max_{i=1}^n \| \mathbf X_i\|_2,\\
\mathbf X\in \mathbb R^{n\times m}.
\end{gathered}
\end{equation}
\end{def:h}
The mapping $h$ has the following property:
\newtheorem{prop:h}[def:rms]{Proposition}
\label{prop:h}
\begin{prop:h}
$\forall \mathbf X\in \mathbb R^{n\times m}, \mathbf Y \in \mathbb R^{r\times m}$, we have:
\begin{equation}
h(\mathbf X\mathbf Y^{\top}) \le \sqrt r h(\mathbf X)h (\mathbf Y).
\end{equation}
\end{prop:h} 

\begin{proof}[Proof]
\label{proof1}
Since
\begin{equation}
\begin{aligned}
[\mathbf X\mathbf Y^{\top}]_{ij}
&=\mathbf  X_i [\mathbf Y_j]^{\top}\\
&\le {\| \mathbf X_i\|_2}\|\mathbf Y_j\|_2\\
&\le h(\mathbf X)h(\mathbf Y),
\end{aligned}
\end{equation}
so
\begin{equation}
\begin{aligned}
\| [\mathbf X\mathbf Y^{\top}]_i\|_2
&=\sqrt{\sum_{j=1}^r ([\mathbf X\mathbf Y^{\top}]_{ij})^2} \\
&\le \sqrt{r(h(\mathbf X) h(\mathbf Y))^2}\\
&=\sqrt r h(\mathbf X)h(\mathbf Y),\\
h(\mathbf X\mathbf Y^{\top})
&=\max_{i=1}^r \left\| [\mathbf X\mathbf Y^{\top}]_i \right\|_2 \\
&\le \sqrt r h(\mathbf X)h(\mathbf Y).
\end{aligned}
\end{equation}
\end{proof}

\subsection{Gradient analysis}
\subsubsection{Preliminary}
Given gradient $\nabla_{\mathbf O}\mathcal L \in \mathbb R^{n\times d}$, let's compute $\nabla_{ \mathbf S}\mathcal L $ in every situation. 

We first define:
\begin{equation}
\label{upperbound}
\begin{aligned}
c_1&=h(\nabla_{\mathbf O}\mathcal L)\\
&= \max_{i=1}^n \|\nabla_{\mathbf O_i}\mathcal L \|_2,  \\
c_2&=h(\mathbf V)\\
&= \max_{i=1}^n \|\mathbf V_i \|_2 <\infty, \\
c_3&=  \min_{i,j} | s_{ij}| \ge 0.
\end{aligned}
\end{equation}
Before we get started, we have the following propositions. The proof can be found in Appendix \ref{proof:hy}.
\newtheorem{c1}[def:rms]{Proposition}
\begin{c1}
\label{prop:c1}
$c_1 < \infty$.
\end{c1} 
\newtheorem{th1}[def:rms]{Proposition}
\begin{th1}
\label{prop:th1}
$\forall \mathbf X\in \mathbb R^{n\times m}$, we have:
\begin{equation}
\|\mathbf X\|_2 \le \sqrt n h(\mathbf X).
\end{equation}
\end{th1}

Take $\mathbf X= \mathbf V$, we get:
\begin{equation}
\|\mathbf V\|_2 
\le \sqrt{n} h(\mathbf V) =\sqrt{n} c_2.
\end{equation}

\subsubsection{Vanilla/Linear attention}
According to \eqref{vanilla_linear}, we can discuss vanilla and linear attention under one formula:
\begin{equation}
\nabla_{\mathbf P} \mathcal L
=[\nabla_{\mathbf O}  \mathcal L]\mathbf V\tran \in \mathbb R^{n\times n}.
\end{equation}
Then define matrix $\mathbf U^{(i)}\in \mathbb R^{n\times n}$:
\begin{equation}
[\mathbf U^{(i)}]_{jk} = \frac{\partial p_{ik}}{\partial s_{ij}}.
\end{equation}
According to \eqref{vanilla}, in vanilla attention, we have:
\begin{equation}
\left|[\mathbf U^{(i)}]_{jk} \right| \le \frac{1}{4},
\end{equation}
while in linear attention, we have:
\begin{equation}
\left|[\mathbf U^{(i)}]_{jk} \right| \le \frac{1}{4|s_{ij}|} \le \frac 1 {4 c_3}.
\end{equation}
Since:
\begin{equation}
\begin{aligned}
\frac{\partial \mathcal L}{\partial s_{ij}}
&=\sum_{k=1}^n\frac{\partial \mathcal L}{\partial p_{ik}}\frac{\partial p_{ik}}{\partial s_{ij}}\\
&= (\nabla_{\mathbf P_i}  \mathcal L) (\mathbf U^{(i)}_j)^\top\\
&= (\nabla_{\mathbf O_i}  \mathcal L)\mathbf V\tran (\mathbf U^{(i)}_j)^\top .
\end{aligned}
\end{equation}
So we have:
\begin{equation}
\begin{aligned}
\left|\frac{\partial \mathcal L}{\partial s_{ij}} \right|
& \le \| (\nabla_{\mathbf O_i}\mathcal L )\mathbf V\tran\|_2 \left \| {\mathbf U^{(i)}_j}\tran \right\|_2\\
&\le \|\nabla_{\mathbf O_i}\mathcal L \|_2  \| \mathbf V\tran\|_2  \|\mathbf U^{(i)}_j \|_2\\
&\le c_1\times  \sqrt{n}c_2 \times \frac 1  {4 t} \\
&=\frac{ \sqrt{n}c_1 c_2}{4t},
\end{aligned}
\end{equation}
where $t=1$ in vanilla attention and $t=c_3$ in linear attention.

On the other hand, according to Appendix ~\ref{sec:linear}, in linear attention, there exist $\mathbf q_i, \mathbf k_j$, such that:
\begin{equation}
\begin{aligned}
\frac{\partial p_{ik}}{\partial s_{ij}}&=\frac{1}{\|{\mathbf x_0}\tran {\mathbf x_0}\|} t_{ijk},\\
t_{ijk}&=\begin{cases}
\frac 1 n(1-\frac 1 n)  & j=k\\
-\frac 1 {n^2}  & j\neq k
\end{cases}.
\end{aligned}
\end{equation}
Then
\begin{equation}
\begin{aligned}
\left| \frac{\partial \mathcal L}{\partial s_{ij}} \right |
&=\left|\sum_{k=1}^n\frac{\partial \mathcal L}{\partial p_{ik}}\frac{\partial p_{ik}}{\partial s_{ij}}  \right|\\
&= \frac{1}{\|{\mathbf x_0}\tran {\mathbf x_0}\|} \left|\sum_{k=1}^n\frac{\partial \mathcal L}{\partial p_{ik}} t_{ijk} \right|\\
&\ge \frac 1 \epsilon \left| \sum_{k=1}^n\frac{\partial \mathcal L}{\partial p_{ik}} t_{ijk} \right|.
\end{aligned}
\end{equation}
Let $\epsilon \to 0^+$, then $\left| \frac{\partial\mathcal L}{\partial s_{ik}} \right|\to \infty$. This means that the gradient in linear attention is unbounded.

\subsubsection{\attentionname}
We first define the second-moment of $i$'th row of $\mathbf T$:
\begin{equation}
\begin{aligned}
\sigma_i^2&= \frac{\sum_{j=1}^d t_{ij}^2}{d}.
\end{aligned}
\end{equation}
Then $\frac{\partial o_{ij}}{\partial t_{ik}}$ is as follows:
\begin{equation}
\begin{aligned}
\frac{\partial o_{ij}}{\partial t_{ik}}
&=\frac 1{\sqrt{\sigma_i^2 + \epsilon}}
\left[{ 1\{j=k\} }  -\frac 1 d \frac{t_{ij}t_{ik}}{\sigma_i^2 + \epsilon}
\right ].
\end{aligned}
\end{equation}
Notice that we have the following upper bound:
\begin{equation}
\label{rmsuppper}
\begin{aligned}
&\left|\frac{\partial o_{ij}}{\partial t_{ik}}\right|\\
= & \frac 1{\sqrt{\sigma_i^2 + \epsilon}} 
\left[1\{j=k\} 
-\frac 1 d \frac{t_{ij} t_{ik} 
}{\frac{\sum_{s=1}^d t_{is}^2}  d+ \epsilon} \right ] \\
= & \frac 1{\sqrt{\sigma_i^2 + \epsilon}} 
\left[1\{j=k\} 
+ \frac{t_{ij} t_{ik} 
}{\sum_{s=1}^d t_{is}^2 +d\epsilon} \right ] \\
\le& \frac 1{\sqrt{\sigma_i^2 + \epsilon}} 
\left[1\{j=k\} 
+\frac 1 2 \frac{t_{ij}^2 
+t_{ik}^2}{\sum_{s=1}^d t_{is}^2} \right ] \\
\le & \frac 1{\sqrt{\sigma_i^2 + \epsilon}} 
\left[1 + \frac 1 2 \right]\\
\le & \frac 3{2\sqrt{\sigma_i^2 + \epsilon}}.
\end{aligned}
\end{equation}
Let's define matrix $\mathbf  R^{(i)} \in \mathbb R^{d\times d}$ as follows:
\begin{equation}
\begin{aligned}
[\mathbf  R^{(i)}]_{jk}&=\frac{\partial o_{ik}}{\partial t_{ij}}.
\end{aligned}
\end{equation}
Since
\begin{equation}
\begin{aligned}
\frac{\partial \mathcal L}{\partial t_{ij}}
&=\sum_{k=1}^n\frac{\partial \mathcal L}{\partial o_{ik}}\frac{\partial o_{ik}}{\partial t_{ij}}\\
&= (\nabla_{\mathbf O_i}  \mathcal L) (\mathbf R^{(i)}_j)^\top.
\end{aligned}
\end{equation}
Then we can get:
\begin{equation}
{\nabla_{\mathbf T_i}}\mathcal L  = 
({\nabla_{\mathbf O_i}}\mathcal L)
(\mathbf R^{(i)})\tran\in \mathbb R^{1\times d}.
\end{equation}
According to \eqref{rmsuppper}, we have:
\begin{equation}
\begin{aligned}
\|\mathbf R^{(i)}\|_2 
&\le  \|\mathbf R^{(i)}\|_F \\
&\le \sqrt{\sum_{j=1}^d \sum_{k=1}^d \left[\frac{\partial o_{ij}}{\partial t_{ik}} \right]^2}\\
&\le \frac {3d}{2\sqrt{\sigma_i^2 + \epsilon}} \\
&\le \frac{3d}{2\sqrt{\epsilon}}.
\end{aligned}
\end{equation}
Finally, we get:
\begin{equation}
\begin{aligned}
\nabla_{\mathbf S_i} \mathcal L
&=(\nabla_{\mathbf T_i}  \mathcal L)\mathbf V\tran \\
&=({\nabla_{\mathbf O_i}}\mathcal L)
(\mathbf R^{(i)})\tran \mathbf V\tran \in \mathbb R^{1\times n},\\
\frac{\partial \mathcal L}{\partial s_{ij}}
&= (\nabla_{\mathbf O_i}  \mathcal L)
(\mathbf R^{(i)})\tran \mathbf V_j, \\
\left| \frac{\partial \mathcal L}{\partial s_{ij}} \right|
&=\left|(\nabla_{\mathbf O_i}  \mathcal L) 
(\mathbf R^{(i)})\tran \mathbf V_j  \right|\\
& \le \| \nabla_{\mathbf O_i}  \mathcal L \|_2  \| \mathbf R^{(i)} \mathbf V_j\|_2 \\
&\le  \| \nabla_{\mathbf O_i}  \mathcal L \|_2  \| \mathbf R^{(i)}\|_2 \| \mathbf V_j\|_2 \\
& \le c_1 \times  \frac{3d}{2\sqrt{\epsilon}} \times  c_2 \\
&=  \frac{3c_1 c_2d}{2\sqrt{\epsilon}}.
\end{aligned}
\end{equation}
Let's summarize the previous results.\\
In vanilla attention, we have:
\begin{equation}
\left|\frac{\partial \mathcal L}{\partial s_{ij}} \right| \le \frac{\sqrt{n}c_1 c_2}{4} < \infty.
\end{equation}
In linear attention,  there exist $\mathbf q_i, \mathbf k_j$, such that:
\begin{equation}
\left|\frac{\partial \mathcal L}{\partial s_{ij}} \right| \to \infty. 
\end{equation}
In \attentionname, we have:
\begin{equation}
\left|\frac{\partial \mathcal L}{\partial s_{ij}} \right| \le \frac{3c_1 c_2d}{2\sqrt{\epsilon}} <\infty.
\end{equation}
So $\frac{\partial \mathcal L}{\partial s_{ij}}$ is bounded in vanilla attention and \attentionname, while it's unbounded in linear attention. This makes the training of linear transformer unstable.

\subsection{Proof of the proposition}
\label{proof:hy}

\begin{proof}[Proof of Proposition~\ref{prop:c1}]
Let's consider a one layer Transformer for classification tasks. The input is $\mathbf X\in \mathbb R^{n\times d}$, the label is $\mathbf Y \in \mathbb R^{n\times m}$, where $m$ is the number of categories and $\mathbf Y_i$ is one-hot vector,. $f_1,f_2$ are the activation functions, here we take $f_1=f_2=\mathrm{ReLU}$ as an example. The parameters of the model are:
\begin{equation}
\begin{gathered}
\mathbf W_1\in \mathbb R^{d\times d_1}, \mathbf W_2\in \mathbb R^{d_1 \times d},\\
\mathbf W_{3}\in \mathbb R^{d\times m}.
\end{gathered}
\end{equation}
The forward pass of the model is\footnote{XAttention stands for vanilla/norm attention.}: 
\begin{itemize}
\item $\mathbf X_1 = \mathrm {XAttention}(\mathbf X)\in \mathbb R^{n\times d}. $
\item  $\mathbf X_2=f_1(\mathbf X_1 \mathbf W_1)\in \mathbb R^{n\times d_1}.$
\item  $\mathbf X_3=f_2(\mathbf X_2\mathbf W_2)\in \mathbb R^{n\times d}.$
\item  $\mathbf O= \mathbf T\mathbf  W_{3}\in \mathbb R^{n\times m}.$
\item  $\mathbf P=\mathrm{Softmax}(\mathbf O)\in \mathbb R^{n\times m}.$
\item  $\mathcal L= \mathrm{corss\_entropy}(\mathbf P,\mathbf  Y)\in \mathbb R .$
\end{itemize}
The backward pass of the model is:
\renewcommand{\outlineii}{enumerate}
\begin{outline}[enumerate]
\1 $\nabla_{\mathbf O }{ \mathcal L} =\mathbf P-\mathbf Y \in \mathbb R^{n\times m}.$
\2  The upper bound is:
\begin{equation}
\begin{aligned}
 & h\left(\nabla_{ \mathbf O }{ \mathcal L} \right) \\
=&\max\{\sum_{i=1}^m p_i^2 -2p_1 + 1,\\
&p_i\ge 0, \sum_{i=1}^m p_i = 1  \} \\
\triangleq& a_0  \\
<& \infty .
\end{aligned}
\end{equation}
\1  $\nabla_{\mathbf X_3}{\mathcal L}=(\nabla_{\mathbf O }{ \mathcal L})W_{3}^{\top} \in \mathbb R^{n\times d} .$
\2  The upper bound is:
\begin{equation}
  \begin{aligned}
 & h\left(\nabla_{\mathbf X_3}{\mathcal L} \right)\\
 \le &\sqrt{d}h\left(\nabla_{\mathbf O} {\mathcal L}\right )h\left( \mathbf W_{3}\right)\\
\le  &\sqrt d a_0 h\left(\mathbf  W_{3}\right)\\
 \triangleq & a_1 < \infty .
  \end{aligned} 
\end{equation}

\1  $\nabla_{\mathbf X_2}{ \mathcal L}=\left(f_2'(\mathbf X_2 \mathbf W_2) \odot {\nabla_{\mathbf X_3} \mathcal L} \right) {\mathbf W}_2^{\top}\in \mathbb R^{n\times d} .$
\2 The upper bound is:
\begin{equation}
\begin{aligned}
&h\left(\nabla_{\mathbf X_2}{ \mathcal L}\right)\\
\le &\sqrt{d_1} h\left(f_2'(\mathbf X_2\mathbf W_2) \odot  {\nabla_{\mathbf X_3} \mathcal L}\right) h\left( \mathbf W_2\right)\\
\le &\sqrt{d_1} a_1 h(\mathbf W_2)\\
\triangleq & a_2\\
< &\infty .
\end{aligned}
\end{equation}
\1   $\nabla_{\mathbf X_1}{ \mathcal L}=\left(f_1'(\mathbf X_1 \mathbf W_1) \odot {\nabla_{\mathbf X_2} { \mathcal L}}\right) \mathbf W_1^{\top}\in \mathbb R^{n\times d_1} .$
\2  The upper bound is:
\begin{equation}
\begin{aligned}
& h\left(\nabla_{\mathbf X_1}{ \mathcal L}\right) \\
\le & \sqrt{d} h\left(f_1'(\mathbf X_1 \mathbf W_1) \odot {\nabla_{\mathbf X_2} { \mathcal L}}\right) h\left( \mathbf W_1\right)\\
\le & \sqrt{d} a_2 h(\mathbf W_2)\\
\triangleq & a_3\\
< & \infty .
\end{aligned}
\end{equation}
\end{outline}
So the gradient passed to XAttention module is bounded, \ie $c_1=a_3 <\infty$.
\end{proof}

\begin{proof}[Proof of Proposition~\ref{prop:th1}]
\begin{equation}
\begin{aligned}
\|\mathbf X\|_2 
&\le \|\mathbf X \|_F \\
&= \sqrt{\sum_{i=1}^n \|\mathbf X_i\|_2^2} \\
&\le \sqrt{\sum_{i=1}^n [h(\mathbf X)]^2}\\
&=\sqrt{n}h(\mathbf X) .
\end{aligned}
\end{equation}
\end{proof}

\section{Experiment configs}
In this section, we will introduce detailed training hyperparameters. We introduce the configurations for autoregressive/bidirectional language model in table \ref{lm:config}. For LRA benchmark, we use the same configuration as Skyformer, which use 2-layer transformer model with 64 hidden dimensions, 2 attention heads, 85 GLU dimensions, Swish as GLU activation function. For batch size and learning rate , we use 16,1e-4 for Text Classification, 32,1e-4 for ListOps, 16,2e-4 for Document Retrieval, 128,2e-4 for Pathfinder, 256,1e-4 for Image Classification, the same as Skyformer.

\newpage
\section{Pseudocode for visualization.}
\label{appendix pseudocode}
In this section, we provide pseudo codes for the 4th column of Figure \ref{fig:attention matrix} in Python:
\begin{lstlisting}
import torch

def get_curve(w):
    n, m = w.shape
    num = 100
    P =  torch.linspace(0, 1, num)
    cnts = torch.zeros(num)
    for i in range(n):
        cnt = torch.zeros(num)
        w1 = w[i].clone()
        center = i % m
        s = w1[center].item()
        L = 1
        l = center - 1
        r = center + 1
        j = 1
        l_thre = 0
        r_thre = m
        flag = 0

        while L < m and j < num:
            if (s >= P[j].item()):
                cnt[j] = L
                j += 1
                continue
            if flag == 1:
                if r != r_thre:
                    s += w1[r].item()
                r = min(r_thre, r + 1)
                flag = 0
            else:
                if l != l_thre:
                    s += w1[l].item()
                l = max(l_thre, l - 1)
                flag = 1
            L = min(r - l + 1, m)

        if L >= m:
            for u in range(j, num):
                cnt[u] = min(L, m)

        cnt[0] = 0
        cnts += cnt
    cnts = cnts / n / m
    
    plt.plot(cnts, P)
    
    return cnts
\end{lstlisting}

\label{lm:config}
\begin{table*}[!ht]
\small
\center
\setlength{\tabcolsep}{14mm}{
\caption{Detailed configurations used in our experiments. ``Total batch size'' means $\mathrm{batch\_per\_gpu} \times \mathrm{update\_freq} \times \mathrm{num\_gpus}$. ``Attention dropout'' is only used for vanilla attention. ``ALM'': autoregressive Language Model. ``BLM'': bidirectional Language Model. }
\label{configuration}
\begin{tabular}{l|l|l}
\hline\hline
                                                             & AML & BLM  \\
\hline\hline
Data                                                         & WikiText-103                  & WikiText-103                         \\
Tokenizer method & BPE    & BPE  \\
Src Vocab size & 267744  & 50265  \\
Encoder layers                                               & 0                             & 12                                                \\
Decoder layers                                               & 6                             & 0                                                \\
Hidden dimensions                                            & 512                           & 768                                             \\
Number of heads                                              & 8                             & 12                                               \\
GLU dimensions                                               & 2048                          & 1365                                          \\
GLU activation function                                      & Swish                          & Swish                                          \\
Sequence length                                               & 512                           & 512                                               \\
Total batch size & 128                           & 512                                              \\
Number of updates                                            & 100k                           & 50k                                          \\
Warmup steps                                                 & 8k                            & 3k                                              \\
Peak learning rate                                           & 5e-4                      & 5e-4                              \\
Learning rate scheduler                                      & Inverse sqrt                  & Polynomial decay                       \\
Optimizer                                                    & Adam                          & Adam                                          \\
Adam $\epsilon$                                                     & 1e-8                      & 1e-6                                 \\
Adam $(\beta_1,\beta_2)$                                                   & (0.9, 0.98)                   & (0.9, 0.98)                            \\
Weight decay                                                 & 0.01                          & 0.01                                          \\
Gradient clipping                                            &  0.0  & 0                                               \\
Hidden dropout                                              & 0.1                           & 0.1                                             \\
Attention dropout                   & 0                             & 0.1                            \\     
\hline\hline
\end{tabular}}

\end{table*}

\end{document}